\documentclass[10.5pt,letterpaper]{article}

\usepackage{graphicx} %

\usepackage{amsmath,amssymb,amsthm}

\usepackage{color,mathtools}
\usepackage[round]{natbib}

\usepackage{multirow}
\usepackage{enumitem}

\usepackage{algorithm}
\usepackage{algorithmic}

\renewcommand{\cite}[1]{\citep{#1}}

\usepackage[hyperindex,breaklinks]{hyperref}


\usepackage{amssymb,amsmath,amsthm,dsfont}

\providecommand{\lin}[1]{\ensuremath{\left\langle #1 \right\rangle}}

\providecommand{\norm}[1]{\left\lVert#1\right\rVert}

  \providecommand{\R}{\mathbb{R}} 
  
  \providecommand{\E}{{\mathbb E}}
  \providecommand{\Eb}[1]{{\mathbb E}\left[#1\right] }       
  \providecommand{\EE}[2]{{\mathbb E}_{#1}\left.#2\right. }  

  \DeclareMathOperator*{\argmin}{arg\,min}


  \renewcommand{\gg}{\mathbf{g}}

  \providecommand{\xx}{\mathbf{x}}
  \providecommand{\yy}{\mathbf{y}}



  \providecommand{\cF}{\mathcal{F}}

  \providecommand{\cO}{\mathcal{O}}

\RequirePackage[colorinlistoftodos,bordercolor=orange,backgroundcolor=orange!20,linecolor=orange,textsize=scriptsize]{todonotes}
\providecommand{\comment}[2]{\todo[inline,caption={}]{\textbf{#1: }#2}}%
\providecommand{\inlinecomment}[3]{%
  {\color{#1}#2: #3}}%
\newcommand\commenter[2]%
{%
  \expandafter\newcommand\csname i#1\endcsname[1]{\inlinecomment{#2}{#1}{##1}}
  \expandafter\newcommand\csname #1\endcsname[1]{\comment{#1}{##1}}
}

\newtheorem{lemma}{Lemma}

\newtheorem{assumption}{Assumption}
\newtheorem{theorem}[lemma]{Theorem}

\definecolor{mydarkblue}{rgb}{0,0.08,0.45}

\usepackage{fullpage}

\commenter{seb}{red}

\title{Unified Optimal  Analysis of the (Stochastic) Gradient Method}
\author{Sebastian U. Stich\thanks{\texttt{sebastian.stich@epfl.ch}, Machine Learning and Optimization Lab (MLO), EPFL, Switzerland.}\\ EPFL \vspace*{-5mm}
}
\date{}

\begin{document}
\maketitle

\begin{abstract} 
In this note we give a simple proof for the convergence of stochastic gradient (SGD) methods on $\mu$-convex functions under a (milder than standard) $L$-smoothness assumption. 
We show that for carefully chosen stepsizes SGD
converges after $T$ iterations as $\cO\left( L \norm{\xx_0-\xx^\star}^2 \exp \bigl[-\frac{\mu}{4L}T\bigr] + \frac{\sigma^2}{\mu T} \right)$ where $\sigma^2$ measures the variance in the stochastic noise. For deterministic gradient descent (GD) and SGD in the interpolation setting we have $\sigma^2 =0$ and we recover the exponential convergence rate. The bound matches with the best known iteration complexity of GD and SGD, up to constants.
\end{abstract} 

\section{Introduction}
\label{sec:intro}
We consider the unconstrained optimization problem 
\begin{align*}
 f^\star := \min_{\xx \in \R^n} f(\xx)\,,
\end{align*}
where $f \colon \R^n \to \R$ is a convex continuously differentiable function.
We consider a stochastic approximation scenario---comprising the classic deterministic setting---where only unbiased estimates of the gradient of $f$ are available and study the convergence rate of stochastic gradient descent (SGD).

Formally, we assume that we have an increasing sequence of $\sigma$-fields $\{\cF_t\}_{t \geq 0}$, such that $\xx_0 \in \R^n$ is $\cF_0$ measurable and such that for all $t \geq 0$ the iterates of SGD are given as:
\begin{align}
 \xx_{t+1} = \xx_t - \gamma_t \gg_t\,, \label{eq:sgd}
\end{align}
where %
 $\{\gamma_t\}_{t \geq 0}$ denotes a sequence of (positive) stepsizes and $\gg_t \in \R^n$ is a stochastic gradient of $f$, satisfying the following three assumptions.
\begin{assumption}[Unbiased gradient oracles] \label{ass:1}
Almost surely,
\begin{align}
 \Eb{\gg_t \mid \cF_{t}} = \nabla f(\xx_t)\,, \qquad \forall t \geq 0\,, \label{def:unbiased}
\end{align}
where here $\nabla f(\xx_t)$ denotes the gradient of $f$ at $\xx_t$.
\end{assumption} 
\begin{assumption}[$(L,\sigma)$-smoothness] \label{ass:2}
 There exists two constants $L, \sigma^2 \geq 0$, s.t.
 \begin{align}
 \Eb { \norm{\gg_t}^2 \mid \cF_{t} } \leq 2L(f(\xx_t) - f^\star) + \sigma^2\,, \qquad \forall t \geq 0\,. \label{def:smooth}
 \end{align}
\end{assumption}
\noindent
This assumption generalizes the standard smoothness assumption as we will explain in Section~\ref{sec:examples} below. We further assume that $f$  is $\mu$-convex (with respect to the optimum $\xx^\star$---a slight relaxation of the standard assumption, see e.g.\ \cite{Necoara2019:linear}) and denote by $\xx^\star$ a minimizer of $f$ in $\R^n$.
\begin{assumption}[$\mu$-convexity] \label{ass:3}
There exists $\xx^\star \in \argmin_{\xx \in \R^n} f(\xx)$ and a constant $\mu \geq 0$ with
\begin{align}
 \frac{\mu}{2} \norm{\xx-\xx^\star}^2 + f(\xx)-f^\star \leq \lin{\nabla f(\xx),\xx-\xx^\star}\,, \qquad \forall \xx \in \R^n\,. \label{def:strong}
\end{align}
Note that this assumption is weaker than the standard strong convexity (for $\mu > 0$) or convexity (for $\mu=0$) assumptions that require such an inequality to hold for arbitrary pairs $\xx,\yy \in \R^d$ and not only for $\yy= \xx^\star$.
\end{assumption}

\subsection{Contribution} \label{sec:contribution}
Let $\{\xx_t\}_{t \geq 0}$ denote the iterates of~\eqref{eq:sgd}. We show that for appropriate stepsizes $\gamma_t$ and an appropriately defined average iterate after $T$ iterations, $\bar \xx_{T} := \frac{1}{W_T} \sum_{t=0}^T w_t \xx_{t}$ for weights $w_t \geq 0$ and $W_T:=\sum_{t=0}^T w_t$, it holds for $R= \norm{\xx_0 - \xx^\star}$:
\begin{align*}
 \E {f(\bar \xx_T)} -f^\star + \mu \E {\norm{\xx_{T+1}-\xx^\star}^2} = \cO \left( \min \left \{ L R^2 \exp\left[-\frac{\mu T}{4L}\right] + \frac{\sigma^2}{\mu T}\,, \frac{L R^2}{T} + \frac{\sigma R}{\sqrt{T}}\right\} \right)\,.
\end{align*}
We further also give a simpler proof that shows, up to polylogarithmic factors\footnote{
Here we follow the standard convention that $\cO$ hides constants and $\tilde \cO$ hides constants and factors polylogarithmic in the problem parameters. For the ease of exposition, we sometimes ignore absolute constants $\nu$ in the exponent when discussing results and write $\cO(\exp[-T])$ instead of $\cO(\exp[-\nu T])$. We diligently report these constants when stating new results.},
\begin{align*}
 \E {f(\bar \xx_T)} -f^\star + \mu \E {\norm{\xx_{T+1}-\xx^\star}^2} = \tilde \cO \left(L R^2 \exp\left[-\frac{\mu T}{2L}\right] + \frac{\sigma^2}{\mu T} \right)\,,
\end{align*}
and that only relies on constant stepsizes in~\eqref{eq:sgd}. 

This analysis unifies the analyses of gradient descent and SGD for smooth functions. In the deterministic case and in the iterpolation setting (where $\sigma^2 = 0$), we recover the exponential convergence rates of these algorithms (up to a factor 4 in the exponent) when $\mu > 0$. Furthermore, the result for convergence in function values is tight up to absolute (non-problem specific) constants~\cite{Nesterov2004:book}. Similarly, in the stochastic setting we recover the best known rates not only for the function values but also for the squared distance of the last iterate to the optimum~\cite{Nemirovski2009:last,Shamir2013:averaging}.

\subsection{Related Work}
\label{sec:related}

Whilst the first analyses of stochastic gradient descent (SGD)~\cite{Robbins:1951sgd} focused on asymptotic results~\cite{Chung1954:sgd}, the focus shifted to non-asymptotic results in recent years.

For $\mu > 0$, \citet{Moulines2011:nonasymptotic} give a bound $\E{\norm{\xx_T-\xx^\star}^2} = \tilde \cO \bigl((\tfrac{L}{\mu})^2 R^2 \exp\bigl[-\tfrac{\mu}{L}T\bigr] + \frac{\sigma^2}{\mu^2 T} \bigr)$, this was later improved by~\citet{Needell2016:sgd} to $\E{\norm{\xx_T-\xx^\star}^2} = \tilde \cO \bigl(\tfrac{L}{\mu} R^2 \exp\bigl[-\tfrac{\mu}{L} T\bigr] + \frac{\sigma^2}{\mu^2 T} \bigr)$.  Up to polylogarithmic factors this is the same rate as we show here in a slightly more general setting---however, their result only covers the distance $\norm{\xx_T-\xx^\star}^2$ of the iterates. Deducing from this result a rate for the function values via the smoothness inequality $f(\xx_T)-f^\star \leq \tfrac{L}{2}\norm{\xx_T-\xx^\star}^2$ introduces a superflous condition number factor $\frac{L}{\mu}$.

In the quest of deriving optimal rates---up to constant factors---in function suboptimality, different averaging schemes have been studied~\cite{Ruppert1988:averaging,Polyak1990:averaging,Rakhlin2012suffix,Shamir2013:averaging}. \citet{Lacoste2012:simpler} give a simple proof for $f(\bar \xx_T)-f^\star = \cO\bigl(\frac{G^2}{\mu T} \bigr)$, where here $G^2 \geq \sigma^2$ is an upper bound on the gradient norms, $\E{\bigl[\norm{\gg_t}^2\mid \cF_t \bigr]} \leq G^2$. Analyses under this assumption are not optimal in the deterministic setting where $\sigma^2 = 0$, but $G^2 > 0$ in general.
 
The $(L,\sigma^2)$-smoothness assumption appeared in this form recently in e.g.~\cite{Grimmer2019:sgd}, though very similar conditions have been studied in the literature~\cite{Bertsekas1996growth,Schmidt2013:fastconvergence,Needell2016:sgd,Bottou2018:book,Gower2019:sgd}.
We will discuss a few of these in Section~\ref{sec:examples} below.
In contrast to the bounded gradient assumption, these assumption allow to chose $\sigma^2=0$ in non-trivial situations and thus allow to recover faster rates in general. However, adapting the proof technique from~\cite{Lacoste2012:simpler} to the relaxed assumptions considered here (cf.\ Lemma~\ref{lem:lacoste} below, or~\cite{Stich2018:sparsified,Grimmer2019:sgd}) gives $f(\bar \xx_T)-f^\star = \cO \bigl( \frac{L^2 R^2 }{\mu T^2} + \frac{\sigma^2}{\mu T}\bigr)$, where the dependence on the initial distance $R$ is not optimal, i.e.\ not exponentially decreasing as in~\cite{Moulines2011:nonasymptotic}.
\citet{Gower2019:sgd} generalize the results of~\cite{Needell2016:sgd} for the convergence of the distance $\norm{\xx_T-\xx^\star}^2$ to the setting considered here and obtain the same rate as stated earlier in this subsection.

To keep our focus, we do not discuss obvious generalizations of our bounds to other settings here. For instance convergence under average smoothness or importance sampling~\cite{Moulines2011:nonasymptotic,Needell2016:sgd} or expected smoothness conditions~\cite{Gower2018:jac}.

\section{Motivating Examples} \label{sec:examples}
In this section we give a few examples that motivate Assumption~\ref{ass:2}.

\paragraph{Example 1 (Gradient Descent).} In the non-stochastic setting, we have $\gg_t = \nabla f(\xx_t)$, $\forall t \geq 0$. If $f$ is $L$-smooth, then $f$ is also $(L,0)$-smooth, as seen by the choice $\yy = \xx^\star$ in the following inequality that holds for convex $L$-smooth functions~\citep[Theorem 2.1.5]{Nesterov2004:book}:
\begin{align}
 \frac{1}{2L} \norm{\nabla f(\xx)- \nabla f(\yy)}^2 \leq f(\xx) - f(\yy) - \lin{\nabla f(\yy),\xx-\yy}\,, \qquad \forall \xx,\yy \in \R^n\,. \label{eq:smoothness}
\end{align}
Hence, we recover the $\cO\bigl(L \norm{\xx_0 - \xx^\star}^2 \exp\bigl[- \frac{\mu T}{L} \bigr]\bigr)$ convergence rate which coincides with the best known rate~\citep[Theorem 2.1.15]{Nesterov2004:book} for the function value convergence in this setting.\footnote{The constant $L$ is tight here%
~\cite{Nesterov2004:book}.
However---as a side remark---we like to point out that our proof reveals the improved bound $\E {f(\bar \xx_T)} -f^\star + \mu \E {\norm{\xx_{T+1}-\xx^\star}^2} = \cO \left(\mu \norm{\xx_0 - \xx^\star}^2 \exp\bigl[-\frac{\mu T}{L}\bigr] + \frac{\sigma^2}{\mu T} \right)$ \emph{if} $T = \Omega\bigl( \frac{\mu}{L} \bigr)$ is sufficiently large (an assumption that appears sometimes in the literature---though does not improve the worst-case complexity for arbitrary $T$). }

\paragraph{Example 2 (Stochastic Gradient Descent).} In the stochastic setting, we have $\gg_t = \nabla f(\xx_t) + \boldsymbol{\xi}_t$, where $\{\boldsymbol{\xi}_t\}_{t \geq 0}$  are independent, zero-mean noise terms, with uniformly bounded second moment $\E{\norm{\boldsymbol{\xi}_t}^2} \leq \sigma^2$ for a constant $\sigma^2 \geq 0$. Again, by relying on~\eqref{eq:smoothness}, we see that Assumption~\ref{ass:2} is satisfied for $L$-smooth functions:
\begin{align*}
 \Eb{\norm{\gg_t}^2 \mid \cF_{t}} = \norm{\nabla f(\xx_t)}^2 + \Eb{\norm{\boldsymbol{\xi}_t}^2 \mid \cF_t}  \leq (2L(f(\xx_t)-f^\star) + \sigma^2\,.
\end{align*}
Hence, we recover the $\cO \bigl( \frac{\sigma^2}{\mu T} \bigr)$ convergence rate of SGD for the function values and the  $\cO \bigl( \frac{\sigma^2}{\mu^2 T} \bigr)$ rate for the last iterate---which are the best known rates~\cite{Rakhlin2012suffix,Shamir2013:averaging}. We like to point out that we here do not need to rely on the frequently used bounded-gradient assumption, as e.g.\ in~\cite{Lacoste2012:simpler,Rakhlin2012suffix,Shamir2013:averaging}.

SGD has also been analyzed under various similar growths conditions, for instance assumptions of the form $\E{\bigl[\norm{\gg_t}^2 \mid \cF_t \bigr]} \leq \nu_1 + \nu_2 \norm{\nabla f(\xx_t)}^2$, for two constants $\nu_1,\nu_2 \geq 0$, see e.g.~\cite{Schmidt2013:fastconvergence,Bertsekas1996growth,Bottou2018:book,Nguyen2018:bounded}. By virtue of~\eqref{eq:smoothness}, we see that these settings are also comprised in Assumption~\ref{ass:2} and covered here.

\paragraph{Example 3 (Empirical Risk Minimization).} 
In machine learning applications the objective function has often a known sum structure, $f(\xx):=\frac{1}{m}\sum_{i=1}^m f_i(\xx)$ for $f_i \colon \R^n \to \R$ convex and $L$-smooth. By picking one index $i \sim_{\rm u.a.r.} [m]$, uniformly at random, $\gg_t := \nabla f_i(\xx_t)$ is an unbiased, $(2L,\tfrac{2}{m} \sum_{i=1}^m \norm{\nabla f_i(\xx^\star)}^2)$-smooth gradient oracle , as can be seen from (cf.\ \cite{Needell2016:sgd}):
\begin{align*}
 \Eb{ \norm{\gg_t}^2 \mid  \cF_t } &= \EE{i}{ \norm{\nabla f_i(\xx_t) - \nabla f_i(\xx^\star) + \nabla f_i(\xx^\star)}^2 } \\ &\leq 2 \EE{i}{ \norm{\nabla f_i(\xx_t) - \nabla f_i(\xx^\star)}^2} + 2 \EE{i}{ \norm{\nabla f_i(\xx^\star)}^2} \\
 &\stackrel{\eqref{eq:smoothness}}{\leq} 4L(f(\xx_t)- f^\star) + 2 \EE{i}{ \norm{\nabla f_i(\xx^\star)}^2}\,.
\end{align*}
When the loss at the optimum vanishes, i.e.\ $\nabla f_i(\xx^\star) = 0$, $\forall i \in [m]$---the so called \emph{interpolation setting}---we have $\sigma^2=0$ and we recover linear convergence of SGD, as e.g.\ in~\cite{Schmidt2013:fastconvergence,Needell2016:sgd,Ma2018:interpolation}.

The above observation also holds in more general settings, such as e.g.\ under expected smoothness or weak growth conditions (cf.~\cite{Gower2018:jac,Gower2019:sgd}).

\section{Convergence Analysis Part I---Deriving a Recursion}
Following standard techniques, we prove the following lemma:
\begin{lemma}
\label{lemma:1}
For $\xx_0 \in \R^d$, let $\{\xx_t\}_{t \geq 0}$ denote the iterates of SGD~\eqref{eq:sgd} generated on a function $f$ under Assumptions~\ref{ass:1}--\ref{ass:3} for stepsizes $\gamma_t \leq \frac{1}{2L}$, $\forall t \geq 0$. Then
\begin{align}
 \E{\norm{\xx_{t+1}-\xx^\star}^2} \leq (1-\mu \gamma_t) \E{\norm{\xx_t - \xx^\star}^2} - \gamma_t (\E{f(\xx_t)-f^\star)} + \gamma_t^2 \sigma^2\,. \label{eq:lemma}
\end{align}
\end{lemma}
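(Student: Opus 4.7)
My plan is to follow the textbook ``distance-to-optimum'' recursion, taking conditional expectations at each step and then invoking the three assumptions in turn.

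First, I would expand the squared distance using the update rule $\xx_{t+1}=\xx_t-\gamma_t\gg_t$:
\begin{align*}
 \norm{\xx_{t+1}-\xx^\star}^2 = \norm{\xx_t-\xx^\star}^2 - 2\gamma_t \lin{\gg_t,\xx_t-\xx^\star} + \gamma_t^2 \norm{\gg_t}^2\,.
\end{align*}
Taking conditional expectation with respect to $\cF_t$ and using Assumption~\ref{ass:1} turns $\lin{\gg_t,\xx_t-\xx^\star}$ into $\lin{\nabla f(\xx_t),\xx_t-\xx^\star}$, while the stochastic noise is fully absorbed into the term $\Eb{\norm{\gg_t}^2 \mid \cF_t}$.

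Next I would bound the two resulting terms separately. Using $\mu$-convexity (Assumption~\ref{ass:3}) gives $-2\gamma_t\lin{\nabla f(\xx_t),\xx_t-\xx^\star} \leq -\mu\gamma_t\norm{\xx_t-\xx^\star}^2 - 2\gamma_t(f(\xx_t)-f^\star)$, and $(L,\sigma)$-smoothness (Assumption~\ref{ass:2}) gives $\gamma_t^2\Eb{\norm{\gg_t}^2\mid \cF_t} \leq 2L\gamma_t^2 (f(\xx_t)-f^\star) + \gamma_t^2 \sigma^2$. Substituting both, the coefficient of $f(\xx_t)-f^\star$ becomes $-2\gamma_t(1-L\gamma_t)$.

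The key place to use the stepsize restriction is here: since $\gamma_t \leq \tfrac{1}{2L}$, we have $1-L\gamma_t \geq \tfrac{1}{2}$, so $2\gamma_t(1-L\gamma_t) \geq \gamma_t$, matching the coefficient claimed in~\eqref{eq:lemma}. Finally, I would take total expectation via the tower property to eliminate the conditioning on $\cF_t$ and obtain the stated inequality. There is no real obstacle; the only subtlety is the factor~$2$ absorbed by the stepsize condition $\gamma_t \leq \tfrac{1}{2L}$, which is exactly what allows the function-value progress term to survive with a clean coefficient $-\gamma_t$.
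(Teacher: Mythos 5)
Your proposal is correct and follows essentially the same route as the paper: expand $\norm{\xx_{t+1}-\xx^\star}^2$, take conditional expectation, apply unbiasedness, $(L,\sigma)$-smoothness and $\mu$-convexity, and absorb the factor via $1-L\gamma_t \geq \tfrac{1}{2}$ before taking total expectation. No gaps; this matches the paper's argument step for step.
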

\begin{proof}
By definition,
\begin{align*}
 \Eb{\norm{\xx_{t+1}-\xx^\star}^2\mid \cF_{t}} &=
 \Eb{\norm{\xx_{t}-\xx^\star}^2 - 2\gamma_t \lin{\gg_t,\xx_t-\xx^\star} + \gamma_t^2 \norm{\gg_t}^2 \mid \cF_t} \\
 &\stackrel{\eqref{def:unbiased}}{=}\norm{\xx_{t}-\xx^\star}^2 - 2 \gamma_t \lin{\nabla f(\xx_t),\xx_t-\xx^\star} + \gamma_t^2 \Eb{\norm{\gg_t}^2 \mid \cF_t} \\
 &\stackrel{(\ref{def:smooth}),(\ref{def:strong})}{\leq} \norm{\xx_{t}-\xx^\star}^2 - 2 \gamma_t \left(\frac{\mu}{2} \norm{\xx_t-\xx^\star}^2 + f(\xx_t)- f^\star \right) + \gamma_t^2 \bigl(2L(f(\xx_t)-f^\star) +\sigma^2\bigr) \,,
\end{align*}
where we also used $\mu$-convexity in the last inequality. By re-arranging and taking expectation on both sides, we get:
\begin{align*}
 \E {\norm{\xx_{t+1}-\xx^\star}^2} & \leq (1-\mu \gamma_t) \E {\norm{\xx_{t}-\xx^\star}^2} - 2 \gamma_t (1- L\gamma_t) (\E{f(\xx_t)} - f^\star) + \gamma_t^2 \sigma^2 \,,
\end{align*}
and the claim follows by observing $(1-L\gamma_t) \geq \frac{1}{2}$ for $\gamma_t \leq \frac{1}{2L}$.
\end{proof}

\paragraph{A classic convergence result (cf.\ Section~\ref{sec:related}).} Lemma~\ref{lem:first} is a key tool that allows to derive convergence results for SGD. To exemplify, we show there a first result for $\mu$-convex functions with $\mu >0$. By choosing constant  stepsizes $\gamma_t \equiv \gamma \leq \frac{1}{2L}$ (to be specified below) and relaxing~\eqref{eq:lemma} to $\E{\norm{\xx_{t+1}-\xx^\star}^2} \leq (1-\mu \gamma) \E{\norm{\xx_t - \xx^\star}^2} + \gamma^2 \sigma^2$ we obtain after unrolling the recurrence,
\begin{align}
\E{\norm{\xx_{T+1}-\xx^\star}^2} \leq (1-\mu \gamma)^T \norm{\xx_0 - \xx^\star}^2 + \frac{\gamma \sigma^2}{\mu}\,. \label{eq:sgd_norm}
\end{align} This intermediate results shows that SGD with constant stepsizes reduces the initial error term $\norm{\xx_0 - \xx^\star}^2$ linearly, but only converges towards a $\frac{\gamma \sigma^2}{\mu}$-neighborhood of $\xx^\star$ (cf.\ discussions in e.g.\ \cite{Moulines2011:nonasymptotic,Bottou2018:book}).
To obtain a convergence guarantee that holds for arbitrary accuracy, we need to choose the stepsize $\gamma$ carefully:
\begin{itemize}
 \item If $\frac{1}{2L} \geq \frac{\ln(\max\{2,\mu^2 \norm{\xx_0-\xx^\star}^2 T /\sigma^2\})}{\mu T}$ then we choose $\gamma = \frac{\ln(\max\{2,\mu^2 \norm{\xx_0-\xx^\star}^2 T /\sigma^2\})}{\mu T}$.
 \item If otherwise $\frac{1}{2L} < \frac{\ln(\max\{2,\mu^2 \norm{\xx_0-\xx^\star}^2 T /\sigma^2 \})}{\mu T}$ then we pick $\gamma = \frac{1}{2L}$.
\end{itemize}
With these choices of $\gamma$, we can show\footnote{We refer the readers to the proof of Lemma~\ref{lemma:1} below for detailed computations in a very similar setting.}
 \begin{align*}
 \E{\norm{\xx_{T+1}-\xx^\star}^2}  =  \tilde \cO \left(  \norm{\xx_0-\xx^\star}^2 \exp \left[- \frac{\mu T}{2L} \right] + \frac{\sigma^2}{\mu^2 T} \right) \,.
 \end{align*}
This result does not show convergence of the function values $f(\xx_t)-f^\star$ and the $\tilde \cO(\cdot)$ notation hides logarithmic factors. In the next section, we show how we can address both these issues.

\section{Convergence Analysis Part II---Solving the Recursion}
In this section, we consider two non-negative sequences $\{r_t\}_{t\geq 0}$, $\{s_t\}_{t\geq 0}$, that satisfy the relation
\begin{align}
\label{eq:rec}
 r_{t+1} \leq (1-a\gamma_t ) r_t - b\gamma_t s_t +  c \gamma_t^2  \,,
\end{align}
for all $t \geq 0$ and 
for parameters $b > 0$, $a, c \geq 0$ and non-negative stepsizes $\{\gamma_t\}_{t \geq 0}$ with $\gamma_t \leq \frac{1}{d}$, $\forall t \geq 0$, for a parameter $d \geq a$, $d > 0$.

By considering the special case $r_t = \norm{\xx_{t}-\xx^\star}^2$, $s_t = (\E{f(\xx_t)}-f^\star)$, $a=\mu$, $b=1$, $c=\sigma^2$ and $d=2L$, we see that~\eqref{eq:rec} comprises the setting of Lemma~\ref{lemma:1}, and thus the three lemmas that follow below will prove the claims from Section~\ref{sec:contribution}.

\paragraph{Constant Stepsizes (with Log Terms).}
First, we derive a suboptimal (up to polylogarithmic factors) solution of~\eqref{eq:rec}. 
\begin{lemma}
\label{lemma:1}
Let  $\{r_t\}_{t\geq 0}$, $\{s_t\}_{t\geq 0}$ as in~\eqref{eq:rec} and $a > 0$. 
Then there exists a constant stepsize $ \gamma_t \equiv \gamma \leq \frac{1}{d}$
such that for weights $w_t := (1-a\gamma)^{-(t+1)}$ and $W_T := \sum_{t=0}^T w_t$ it holds:
\begin{align*}
 \frac{b}{W_T} \sum_{t=0}^{T}s_t w_t + a r_{T+1} = \tilde \cO \left(d r_0 \exp\left[-\frac{aT}{d}\right] + \frac{c}{a T} \right)\,.
\end{align*}
\end{lemma}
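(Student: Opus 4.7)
The plan is to solve~\eqref{eq:rec} by a weighted-telescoping argument and then to tune the constant stepsize. First I would fix $\gamma_t \equiv \gamma$ (to be chosen later) and multiply~\eqref{eq:rec} by $w_t := (1-a\gamma)^{-(t+1)}$. Since $(1-a\gamma) w_t = (1-a\gamma)^{-t}$, which equals $w_{t-1}$ under the convention $w_{-1}:=1$, this recasts the recursion as $w_t r_{t+1} \leq w_{t-1}r_t - b\gamma\, w_t s_t + c\gamma^2 w_t$. Summing over $t=0,\dots,T$ collapses the $r_t$ terms and yields
\begin{align*}
 w_T r_{T+1} + b\gamma \sum_{t=0}^T w_t s_t \leq r_0 + c\gamma^2 W_T.
\end{align*}

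Next I would normalize using two elementary geometric-series facts: $W_T \geq w_T$ (the weights are increasing in $t$) and $W_T = w_T \sum_{k=0}^T (1-a\gamma)^k \leq w_T/(a\gamma)$ (valid because $a\gamma \leq a/d \leq 1$, strictly $<1$ for the $\gamma$ chosen below). Dividing the displayed inequality by $W_T$, applying $w_T/W_T \geq a\gamma$ on the LHS (legal because $r_{T+1} \geq 0$) and $r_0/W_T \leq r_0/w_T \leq r_0 \exp(-a\gamma T)$ on the RHS, and finally dividing by $\gamma > 0$, I obtain
\begin{align*}
 a r_{T+1} + \frac{b}{W_T} \sum_{t=0}^T w_t s_t \leq \frac{r_0 \exp(-a\gamma T)}{\gamma} + c\gamma.
\end{align*}

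To finish I would balance the two terms on the RHS by taking
\begin{align*}
 \gamma := \min\!\Bigl\{\tfrac{1}{d},\ \tfrac{\ln(\max\{2,\, a^2 r_0 T^2/c\})}{aT}\Bigr\}
\end{align*}
(reading $\gamma = 1/d$ when $c=0$). If the minimum is attained at $1/d$, the RHS equals $d r_0 \exp(-aT/d) + c/d$; here $c/d \leq c \ln(\cdot)/(aT) = \tilde\cO(c/(aT))$ because, in this regime, $aT/d \leq \ln(\cdot)$. If instead $\gamma = \ln(X)/(aT)$ with $X := \max\{2, a^2 r_0 T^2/c\}$, then $\exp(-a\gamma T) = 1/X$, so $r_0\exp(-a\gamma T)/\gamma = r_0\,aT/(X \ln X) \leq c/(aT \ln X) \leq \cO(c/(aT))$, while $c\gamma = \tilde\cO(c/(aT))$. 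Either way the RHS is $\tilde\cO\bigl(d r_0 \exp(-aT/d) + c/(aT)\bigr)$, which is the claim.

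The main obstacle is not the telescoping (essentially forced once one guesses the weights $w_t = (1-a\gamma)^{-(t+1)}$) nor the geometric-series estimates (routine), but the final case analysis: one must ensure that each of the two target terms $d r_0 \exp(-aT/d)$ and $c/(aT)$ appears on the RHS up to a logarithmic factor in every regime, and that edge cases (in particular $c=0$ and $a^2 r_0 T^2/c < 2$) are absorbed by the clamping with $2$ inside the logarithm.
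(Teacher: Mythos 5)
Your proposal is correct and follows essentially the same route as the paper's proof: the same weights $w_t=(1-a\gamma)^{-(t+1)}$, the same telescoping and geometric-series estimates $w_T\leq W_T\leq w_T/(a\gamma)$ leading to the bound $\frac{r_0}{\gamma}\exp(-a\gamma T)+c\gamma$, and the same two-regime tuning $\gamma=\min\bigl\{\tfrac1d,\tfrac{\ln(\max\{2,a^2r_0T^2/c\})}{aT}\bigr\}$ with the same case analysis.
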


\paragraph{Decreasing Stepsizes (Avoiding Log Terms).}
In Lemma~\ref{lemma:1} above we collected suboptimal logarithmic terms.
The averaging scheme with exponentially decreasing weights has a too short effective window to reduce the variance at the optimal $\cO\bigl(\frac{1}{T}\bigr)$ rate.
In contrast, averaging schemes with polynomial weights can in general achieve the optimal $\cO\bigl(\frac{1}{T}\bigr)$ decrease of the statistical term, but do not decrease the optimization term exponentially fast (see e.g.~\cite{Lacoste2012:simpler},\cite{Shamir2013:averaging}).
This suggests that a combination of these averaging strategies might yield the best results.
We analyze a simple two-phase scheme, that first performs $\frac{T}{2}$ iterations without averaging and then switches to suffix averaging scheme for the remaining iterations (this analysis could be generalized to $\alpha$-suffix averaging as in~\cite{Rakhlin2012suffix}). 

\begin{lemma} \label{lemma:2}
Let  $\{r_t\}_{t\geq 0}$, $\{s_t\}_{t\geq 0}$ as in~\eqref{eq:rec}, $a > 0$. Then there exists stepsizes $\gamma_t \leq \frac{1}{d}$ and weighs $w_t \geq 0$, $W_T := \sum_{t=0}^T w_t$, such that:
\begin{align*}
 \frac{b}{W_T} \sum_{t=0}^{T}s_t w_t + a r_{T+1} \leq   32 d r_0 \exp \left[-\frac{a T}{2d} \right] + \frac{36c}{aT}\,.
\end{align*}
\end{lemma}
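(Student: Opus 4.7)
I plan to prove the claim via a two-phase strategy. In Phase~1 we use a large constant stepsize $\gamma = 1/d$ together with zero weights $w_t = 0$ in order to contract the initial error geometrically. In Phase~2 we switch to a $1/t$-type decreasing stepsize together with a carefully matched polynomial weighting, arranged so that (i)~the variance contribution is averaged at the optimal $\cO(1/T)$ rate, and (ii)~the residual error inherited from Phase~1 is scaled only by a $\cO(d^2/(aT^2))$ factor, which combined with the exponential decay from Phase~1 recovers the target $\cO(d \exp[-aT/(2d)])$ behaviour.

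\textbf{Phase 1: contracting the initial error.} Let $K := \lceil T/2 \rceil$, set $\gamma_t \equiv 1/d$ and $w_t := 0$ for $0 \leq t \leq K-1$. Unrolling~\eqref{eq:rec} and bounding the geometric sum gives
\begin{align*}
r_K \leq \left(1 - \tfrac{a}{d}\right)^K r_0 + \tfrac{c}{d^2} \sum_{t=0}^{K-1}\left(1 - \tfrac{a}{d}\right)^t \leq \exp\left[-\tfrac{aK}{d}\right] r_0 + \tfrac{c}{ad}\,.
\end{align*}

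\textbf{Phase 2: polynomial weights matched to a $1/t$ stepsize.} For $t = K, \ldots, T$ I will take $\gamma_t := 2/(a(k+t-K))$ with $k := \max\{3, \lceil 2d/a\rceil\}$ (so that $\gamma_t \leq \gamma_K \leq 1/d$), introduce auxiliary weights $\tilde w_t := (k+t-K)(k+t-K-1)$, and then use $w_t := \tilde w_t \gamma_t = 2(k+t-K-1)/a$ as the weights in the statement of the lemma. The design choice is the identity $\tilde w_{t-1} = \tilde w_t(1-a\gamma_t)$, which lets us rewrite~\eqref{eq:rec} as the exact telescoping inequality $\tilde w_t r_{t+1} - \tilde w_{t-1} r_t \leq - b \tilde w_t \gamma_t s_t + c \tilde w_t \gamma_t^2$. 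Summing for $t = K, \ldots, T$ and dividing by $W_T := \sum_{t=K}^{T} \tilde w_t \gamma_t$ yields
\begin{align*}
\frac{b}{W_T}\sum_{t=K}^{T} w_t s_t + \frac{\tilde w_T}{W_T} r_{T+1} \leq \frac{\tilde w_{K-1}}{W_T} r_K + \frac{c}{W_T}\sum_{t=K}^T \tilde w_t \gamma_t^2\,.
\end{align*}
Closed-form computations using $w_t = 2(k+t-K-1)/a$ and $\tilde w_t \gamma_t^2 \leq 4/a^2$ give $W_T = \Theta(T^2/a)$, $\sum \tilde w_t \gamma_t^2 = \cO(T/a^2)$, $\tilde w_T / W_T \geq a$, and $\tilde w_{K-1}/W_T = \cO(d^2/(aT^2))$. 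Plugging these in and substituting the Phase~1 bound on $r_K$ delivers
\begin{align*}
\frac{b}{W_T}\sum_{t=0}^{T} w_t s_t + a r_{T+1} = \cO\bigl(\tfrac{d^2}{aT^2}\bigr)\bigl[\exp\bigl(-\tfrac{aT}{2d}\bigr) r_0 + \tfrac{c}{ad}\bigr] + \cO\bigl(\tfrac{c}{aT}\bigr)\,,
\end{align*}
which, using $d \geq a$ and $d^2/(aT^2)\cdot c/(ad) = \cO(c/(aT))$ for $T = \Omega(d/a)$, collapses to the stated bound.

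\textbf{Main obstacle.} The only genuinely new content is the matched pairing of $\gamma_t$ and $\tilde w_t$ so that the telescope is exact; everything else is algebra. The trickiest part of the write-up will be pinning down the explicit constants $32$ and $36$, together with handling the boundary regime $T = \cO(d/a)$ in which the inequality $\tilde w_T/W_T \geq a$ loses slack; there one can either inflate $k$ mildly, or observe that $\exp[-aT/(2d)] = \Theta(1)$, so the trivial bound $a r_{T+1} \leq a r_0 + c a \sum \gamma_t^2$ obtained by iterating~\eqref{eq:rec} already fits inside the claimed right-hand side.
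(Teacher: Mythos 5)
Your proposal is correct and is essentially the paper's own argument: the paper likewise runs the first $t_0=\lceil T/2\rceil$ iterations with $\gamma_t=1/d$ and zero weights (its Lemma~\ref{lem:first}) and then restarts with $\gamma_t=2/(a(\kappa+t-t_0))$, $\kappa=2d/a$, and linearly growing averaging weights telescoped as in its Lemma~\ref{lem:lacoste}, your pairing $\tilde w_t=(k+t-K)(k+t-K-1)$ merely making exact the telescope the paper obtains from $(\kappa+t)(\kappa+t-2)\le(\kappa+t-1)^2$. The only loose end is the regime $T=\cO(d/a)$ --- the paper handles $T\le d/a$ by a separate constant-stepsize, exponential-weight case, and your stated fallback $a r_{T+1}\le a r_0+ca\sum_t\gamma_t^2$ controls only the $r_{T+1}$ term and not the weighted average of the $s_t$ --- but this is harmless, since your main estimate already covers all $T\ge 1$ once you note $\tilde w_{K-1}/W_T\le a(k-2)/(2(T-K+1))\le 2d/T\le 2d$ and $(2d/T)\cdot c/(ad)=2c/(aT)$, which in fact yields constants better than the stated $32$ and $36$.
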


\paragraph{Sublinear Rate ($a=0$).} The previous lemma allows to derive the main result presented in this note. For completeness, we also recite a lemma that solves the recursion in the special case $a=0$.
\begin{lemma}[\cite{Kgoogle:cofefe}]
\label{lemma:3} 
Let $\{r_t\}_{t\geq 0}$, $\{s_t\}_{t\geq 0}$ as in~\eqref{eq:rec} for $a \geq 0$. 
Then there exists a constant stepsize $ \gamma_t \equiv \gamma \leq \frac{1}{d}$
such that
\begin{align*}
 \frac{b}{T+1} \sum_{t=0}^T s_t  \leq \frac{d r_0}{T+1} + \frac{2\sqrt{c r_0}}{\sqrt{T+1}}\,.
\end{align*}
\end{lemma}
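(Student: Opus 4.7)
The plan is to exploit the fact that with $a=0$ the recursion~\eqref{eq:rec} telescopes trivially once a constant stepsize $\gamma_t \equiv \gamma$ is fixed. First I would substitute $a=0$ and sum the inequality $r_{t+1} \leq r_t - b\gamma s_t + c\gamma^2$ from $t=0$ to $T$. The $r_t$ terms telescope, and dropping the nonnegative $r_{T+1}$ on the left yields
\[
b\gamma \sum_{t=0}^T s_t \leq r_0 + c\gamma^2(T+1),
\]
so dividing by $\gamma(T+1)$ gives the generic bound
\[
\frac{b}{T+1}\sum_{t=0}^T s_t \leq \frac{r_0}{\gamma(T+1)} + c\gamma.
\]

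The second step is to tune $\gamma$ subject to the constraint $\gamma \leq 1/d$. The unconstrained minimizer of the right-hand side over $\gamma>0$ is $\gamma^\star := \sqrt{r_0/(c(T+1))}$, which by AM-GM achieves the value $2\sqrt{cr_0/(T+1)}$. I would therefore pick $\gamma := \min\{1/d,\gamma^\star\}$ and split into two cases. In the \emph{large-$T$ regime} $\gamma^\star \leq 1/d$, using $\gamma = \gamma^\star$ gives the bound $2\sqrt{cr_0/(T+1)}$, which is already dominated by the claimed right-hand side since $dr_0/(T+1)\geq 0$. In the \emph{small-$T$ regime} $\gamma^\star > 1/d$, I take $\gamma = 1/d$ and get $\frac{dr_0}{T+1}+\frac{c}{d}$; the defining inequality $\gamma^\star > 1/d$ rearranges to $c(T+1) < d^2 r_0$, hence $c/d < \sqrt{cr_0}/\sqrt{T+1}$, and the resulting bound again fits under the claimed one. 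Combining the two cases yields the stated inequality.

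There is no genuine obstacle here: the argument is a one-line telescoping sum followed by the standard balancing of a constant stepsize between an optimization term $r_0/(\gamma(T+1))$ and a noise term $c\gamma$. The only point worth being careful about is the case split above, which verifies that the hard cap $\gamma \leq 1/d$ never worsens the rate beyond the $dr_0/(T+1)$ summand that is already present in the target bound.
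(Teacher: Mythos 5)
Your proposal is correct and follows essentially the same route as the paper: telescope the recursion after dropping the nonnegative $a\gamma_t r_t$ term, obtain $\frac{r_0}{\gamma(T+1)}+c\gamma$, and then set $\gamma=\min\{1/d,\sqrt{r_0/(c(T+1))}\}$ with the same two-case verification that the cap $\gamma\leq 1/d$ costs at most the $\frac{d r_0}{T+1}$ term. No substantive difference from the paper's argument.
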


\paragraph{SGD convergence rates.}
To conclude this section, we now briefly summarize our main result that follows by replacing the variables in Lemmas~\ref{lemma:1}--\ref{lemma:3} by the values stated at the beginning of this section, and observing $f(\bar \xx_T) \leq \frac{1}{W_T}\sum_{t=0}^T w_t f(\xx_t)$ for convex $f$.
\begin{theorem}
For $\xx_0 \in \R^d$, let $\{\xx_t\}_{t \geq 0}$ denote the iterates of SGD~\eqref{eq:sgd} generated on a function $f$ under Assumptions~\ref{ass:1}--\ref{ass:3} for stepsizes $\gamma_t \leq \frac{1}{2L}$, $\forall t \geq 0$. Then there exists stepsizes $\gamma_t \leq \frac{1}{2L}$ and weights $w_t\geq 0$ such that it holds for all $T \geq 0$:
\begin{align*}
 \E{ f(\bar \xx_T) - f^\star} + \mu \E{\norm{\xx_{T+1}-\xx^\star}^2} &\leq \min \left\{ 64 L R^2 \exp \left[-\frac{\mu T}{4L} \right] + \frac{36 \sigma^2}{\mu T}\, , \,\, \frac{2LR^2}{T} + \frac{2 \sigma R}{\sqrt{T}} \, \right\} \,,
\end{align*}
where here  $R := \norm{\xx_0- \xx^\star}$, and again $W_T := \sum_{t=0}^T w_t$ and $\bar \xx_T := \sum_{t=0}^T w_t \xx_t$.
\end{theorem}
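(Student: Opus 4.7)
The theorem is essentially a packaging result: it combines the generic recursion bound of Lemma~\ref{lemma:1} with the two solution lemmas Lemma~\ref{lemma:2} and Lemma~\ref{lemma:3}, so the plan is to identify the parameters and take a minimum.

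First, I would apply Lemma~\ref{lemma:1} (the SGD recursion). For any stepsizes $\gamma_t \leq \tfrac{1}{2L}$, this gives
\begin{align*}
 \E{\norm{\xx_{t+1}-\xx^\star}^2} \leq (1-\mu\gamma_t)\E{\norm{\xx_t-\xx^\star}^2} - \gamma_t \E{f(\xx_t)-f^\star} + \gamma_t^2 \sigma^2\,,
\end{align*}
which is exactly the recursion~\eqref{eq:rec} with the identification $r_t := \E{\norm{\xx_t-\xx^\star}^2}$, $s_t := \E{f(\xx_t)-f^\star}$, $a := \mu$, $b := 1$, $c := \sigma^2$, $d := 2L$. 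Note in particular $r_0 = R^2$ and $d \geq a$ is just the standing assumption $\mu \leq 2L$.

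Second, I would feed this identification into Lemma~\ref{lemma:2}, which yields stepsizes $\gamma_t \leq \tfrac{1}{2L}$ and weights $w_t \geq 0$ with
\begin{align*}
 \frac{1}{W_T}\sum_{t=0}^T w_t \E{f(\xx_t)-f^\star} + \mu\, \E{\norm{\xx_{T+1}-\xx^\star}^2} \leq 32 \cdot 2L\, R^2 \exp\left[-\frac{\mu T}{4L}\right] + \frac{36 \sigma^2}{\mu T}\,,
\end{align*}
which gives the first branch of the minimum. In parallel, feeding the same identification into Lemma~\ref{lemma:3} yields
\begin{align*}
 \frac{1}{T+1}\sum_{t=0}^T \E{f(\xx_t)-f^\star} \leq \frac{2L R^2}{T+1} + \frac{2\sqrt{\sigma^2 R^2}}{\sqrt{T+1}}\,,
\end{align*}
which after upper-bounding $T+1 \geq T$ and $\sqrt{\sigma^2 R^2} = \sigma R$ gives the second branch. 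In this case the weights are uniform, so the $\mu\, \E{\norm{\xx_{T+1}-\xx^\star}^2}$ term on the left-hand side is dropped (it is non-negative and Lemma~\ref{lemma:3} does not retain it).

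Finally, in both cases I invoke Jensen's inequality: since $f$ is convex and $\bar \xx_T := \frac{1}{W_T}\sum_{t=0}^T w_t \xx_t$ is a convex combination,
\begin{align*}
 \E{f(\bar \xx_T) - f^\star} \leq \frac{1}{W_T}\sum_{t=0}^T w_t \E{f(\xx_t) - f^\star}\,,
\end{align*}
which upgrades both displayed bounds to bounds on $\E{f(\bar \xx_T)-f^\star}$. Taking the pointwise minimum of the two resulting bounds (by choosing the better of the two stepsize/weight schedules depending on the desired behavior) yields the claim. There is no real obstacle here—the work was done in Lemmas~\ref{lemma:2} and~\ref{lemma:3}; the only care needed is the bookkeeping of the substitutions $a=\mu$, $c=\sigma^2$, $d=2L$, $r_0 = R^2$ and verifying that the resulting numerical constants $32 \cdot 2 = 64$, $36$, $2$, $2$ match the theorem statement.
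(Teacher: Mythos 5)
Your proposal is correct and takes essentially the same route as the paper: the SGD one-step recursion supplies \eqref{eq:rec} with $a=\mu$, $b=1$, $c=\sigma^2$, $d=2L$, $r_0=R^2$, Lemma~\ref{lemma:2} and Lemma~\ref{lemma:3} give the two branches of the minimum, and convexity of $f$ converts the weighted average of $\E{f(\xx_t)}-f^\star$ into a bound on $\E{f(\bar \xx_T)}-f^\star$. Your observation that the $\mu \E{\norm{\xx_{T+1}-\xx^\star}^2}$ term is not retained in the sublinear branch matches the paper's own one-line proof, which likewise draws that term only from Lemma~\ref{lemma:2}.
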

\begin{proof}
The theorem follows from the 
decrease Lemma~\ref{lem:first} and
Lemmas~\ref{lemma:2} and~\ref{lemma:3} with $a=\mu$, $b=1$, $c=\sigma^2$ and $d=2L$. For this we observe that all stepsizes $\gamma_t \leq \frac{1}{d} \equiv \frac{1}{2L}$, $\forall t \geq 0$.
\end{proof}
\noindent We here did not explicitly state the (suboptimal) convergence result for tuned constant stepsizes that follows directly from Lemma~\ref{lemma:1}.

\section{Discussion}

We study the iteration complexity of the (stochastic) gradient descent method and recover---simultaneously---the best known rates for the function value suboptmality for an average iterate of SGD and the distance to the optimal solution of the last iterate of SGD. Our analysis focuses on the general stochastic setting, but---as a special case---we also recover the exponential convergence rates in the deterministic setting. This unified analysis address several shortcomings of previous works. 

Whilst we only consider (strongly) convex functions here, further extension of the framework to larger function classes would obviously be an interesting future direction. We would like to remark that Assumption~\ref{ass:2} potentially also covers a much larger class of functions than the few examples discussed in Section~\ref{sec:examples}. For instance, the \emph{approximate gradient oracles} introduced in~\cite{Devolder2014:inexact} satisfy this assumption as well (cf.~\citep[Theorem 1]{Devolder2014:inexact}) and, interestingly, H\"{o}lder continuous functions (which are in general not continuously differentiable) still admit approximate gradient oracles. However, as these oracles are not unbiased in general, Assumption~\ref{ass:1} prevents the immediate application our framework in this extended setting (though, extension of our results under mild relaxations of the unbiasedness Assumption~\ref{ass:1}, similar as e.g. in~\cite{Bottou2018:book}, are immediately possible).

A small drawback our results is that one needs knowledge of $T$ and the problem parameters $\mu,L,\sigma$ to implement the schemes presented here (to decide on the stepsize, and for switching to the suffix averaging). In practice, some of these limitations can be remedied by the doubling trick. Further, just knowing $T$ up to some constant factor approximation is sufficient to recover the optimal rate up to constant factors.

\section*{Acknowledgments}
We would like to thank Martin Jaggi for his support and his helpful comments and Praneeth Karimireddy for his suggestions to improve the first version of this manuscript.

\bibliography{simpler_proof}
\bibliographystyle{myplainnat}

\newpage
\onecolumn
\appendix

\section{Technical Lemmas}

\subsection{Constant Stepsizes (with Log Terms)}

\begin{proof}[Proof of Lemma~\ref{lemma:1}]
We start by re-arranging~\eqref{eq:rec} and multiplying both sides with $w_t$:
\begin{align*}
 b s_t w_t \leq \frac{w_t (1-a\gamma) r_t}{\gamma} - \frac{w_t r_{t+1}}{\gamma}  + c \gamma w_t = \frac{w_{t-1} r_t}{\gamma} - \frac{w_t r_{t+1}}{\gamma}  + c \gamma w_t \,.
\end{align*} 
By summing from $t=0$ to $t=T$, we obtain a telescoping sum:
\begin{align*}
 \frac{b}{W_T} \sum_{t=0}^{T} s_t w_t \leq \frac{1}{\gamma W_T} \left(w_0 (1-a\gamma)r_0 - w_{T} r_{T+1}\right) + c \gamma\,, 
\end{align*}
and hence
\begin{align*}
 \frac{b}{W_T} \sum_{t=0}^{T}s_t w_t + \frac{w_{T} r_{T+1}}{\gamma W_T} \leq \frac{r_0}{\gamma W_{T}} + c \gamma\,.
\end{align*}
With the estimates
\begin{itemize}
 \item $W_T = (1-a\gamma)^{-(T+1)}\sum_{t=0}^T (1-a \gamma)^t \leq \frac{w_{T}}{a \gamma}$ (here we leverage $a\gamma \leq \frac{a}{d} \leq 1$),
 \item and $W_T \geq w_{T} = (1-a\gamma)^{-(T+1)}$,
\end{itemize} 
we can further simplify the left and right hand sides:
\begin{align}
\frac{b}{W_T} \sum_{t=0}^{T}s_t w_t + a r_{T+1} \leq (1-a\gamma)^{(T+1)} \frac{r_0}{\gamma} + c \gamma \leq \frac{r_0}{\gamma} \exp \left[-a\gamma (T+1)\right] +c\gamma\,. \label{eq:main}
\end{align}
Now the lemma follows by carefully tuning $\gamma$. Consider the two cases:

\begin{itemize}
 \item If $\frac{1}{d} \geq \frac{\ln(\max\{2,a^2 r_0 T^2 /c\})}{a T}$ then we choose $\gamma = \frac{\ln(\max\{2,a^2 r_0 T^2 /c\})}{a T}$ and get that Equation~\eqref{eq:main} is
 \begin{align*}
  \tilde \cO \left( a r_0 T \exp[-\ln(\max\{2,a^2 r_0 T^2 /c\})  ] \right)  + \tilde \cO \left( \frac{c}{aT} \right) = \tilde \cO \left( \frac{c}{aT} \right)\,,
\end{align*}  
as in case $2 \geq a^2 r_0 T^2/c$ it holds $a r_0 T \leq \frac{2c}{aT}$.
 \item If otherwise $\frac{1}{d} < \frac{\ln(\max\{2,a^2 r_0 T^2 /c\})}{a T}$ then we pick $\gamma = \frac{1}{d}$ and get that Equation~\eqref{eq:main} is
 \begin{align*}
  d r_0 \exp \left[- \frac{aT}{d} \right] + \frac{c}{d} \leq  d r_0 \exp \left[- \frac{aT}{d} \right] + \frac{c \ln(\max\{2,a^2 r_0 T^2 /c\})}{aT} = \tilde \cO \left(  d r_0 \exp \left[- \frac{aT}{d} \right] + \frac{c}{aT} \right) \,.  %
 \end{align*}
\end{itemize}
Collecting these two cases concludes the proof.
\end{proof}

\subsection{Decreasing Stepsizes (Avoiding Log Terms)}
For the proof of Lemma~\ref{lemma:2} we need auxiliary results, for both of which we do not claim much novelty here.

\begin{lemma}
\label{lem:first}
Let  $\{r_t\}_{t\geq 0}$, $\{s_t\}_{t\geq 0}$ be as in~\eqref{eq:rec} for $a > 0$ and for constant stepsizes $\gamma_t \equiv \gamma := \frac{1}{d}$, $\forall t \geq 0$. Then it holds for all $T \geq 0$:
\begin{align*}
 r_{T} \leq r_0 \exp\left[- \frac{a T}{d} \right] + \frac{c}{ad}\,.
\end{align*}
\end{lemma}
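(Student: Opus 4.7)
The plan is to unroll the recursion~\eqref{eq:rec} after discarding the non-positive term. Since $b > 0$, $\gamma > 0$, and $s_t \geq 0$ by hypothesis, the inequality $r_{t+1} \leq (1-a\gamma)r_t - b\gamma s_t + c\gamma^2$ immediately simplifies to the linear recurrence
\begin{align*}
 r_{t+1} \leq (1-a\gamma) r_t + c\gamma^2\,.
\end{align*}
Because $d \geq a > 0$ and $\gamma = 1/d$, the contraction factor satisfies $0 \leq 1 - a\gamma < 1$, so the recurrence is genuinely contractive and the standard unrolling is valid.

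Next I would iterate the bound: by induction on $T$,
\begin{align*}
 r_T \leq (1-a\gamma)^T r_0 + c\gamma^2 \sum_{t=0}^{T-1}(1-a\gamma)^t\,.
\end{align*}
The geometric sum is bounded by $\sum_{t=0}^{\infty}(1-a\gamma)^t = \frac{1}{a\gamma}$, so the variance term collapses to $\frac{c\gamma^2}{a\gamma} = \frac{c\gamma}{a} = \frac{c}{ad}$.

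Finally, I would apply the elementary inequality $1-x \leq e^{-x}$ (valid for all $x \in \mathbb{R}$) with $x = a\gamma$ to convert the geometric decay into exponential decay: $(1-a\gamma)^T \leq \exp[-a\gamma T] = \exp[-aT/d]$. Combining the two estimates yields the claimed bound $r_T \leq r_0 \exp[-aT/d] + \frac{c}{ad}$.

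There is no real obstacle here — the argument is entirely mechanical once one observes that the $-b\gamma s_t$ term can be discarded since it only helps. The only minor bookkeeping point is to verify $a\gamma \leq 1$ so that $(1-a\gamma)^t$ is non-negative and the geometric sum bound is valid; this follows directly from the standing assumption $d \geq a$.
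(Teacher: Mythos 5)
Your proof is correct and follows essentially the same route as the paper: drop the $-b\gamma_t s_t$ term, unroll the resulting linear recurrence, bound the geometric sum by $\frac{1}{a\gamma}$, and apply $1-x \leq e^{-x}$. No gaps.
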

\begin{proof}
This follows by relaxing~\eqref{eq:rec}, and unrolling:
\begin{align}
 r_{T} &\leq (1-a\gamma)r_{T-1} + c \gamma^2 \leq (1-a\gamma)^T r_0 +  c \gamma^2 \sum_{t=0}^{T-1} (1-a\gamma)^t \leq (1-a\gamma)^T r_0 + \frac{c \gamma}{a}\,. \qedhere 
\end{align}
\end{proof}

\noindent The next lemma is similar to the result derived in~\citep[Sec. 3.2]{Lacoste2012:simpler}, except that we cannot chose the stepsizes $\gamma_t$ arbitrarily large and hence need to take care of the initial conditions. Similar results were presented e.g.\ in~\cite{Stich2018:sparsified,Grimmer2019:sgd}.
\begin{lemma}
\label{lem:lacoste}
Let  $\{r_t\}_{t\geq 0}$, $\{s_t\}_{t\geq 0}$ as in~\eqref{eq:rec} for $a > 0$ and for decreasing stepsizes $\gamma_t := \frac{2}{a(\kappa + t)}$, $\forall t \geq 0$, with parameter $\kappa := \frac{2d}{a}$, and weights $w_t := (\kappa + t)$. Then
\begin{align*}
 \frac{b}{W_T} \sum_{t=0}^{T}s_t w_t + a r_{T+1} \leq \frac{2a \kappa^2 r_0}{T^2} +  \frac{2c}{aT} \,,
\end{align*}
where here again $W_T := \sum_{t=0}^T w_t$.
\end{lemma}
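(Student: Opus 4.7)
}
My approach is to rescale the recursion so that the weights $w_t = \kappa + t$ produce a telescoping sum. The stepsize $\gamma_t = 2/(a(\kappa+t))$ satisfies $a\gamma_t = 2/(\kappa+t)$, so $1-a\gamma_t = (\kappa+t-2)/(\kappa+t)$. Multiplying both sides of~(\ref{eq:rec}) by $a(\kappa+t)^2/2$ yields
\[
 \frac{a(\kappa+t)^2}{2}\, r_{t+1} \;\leq\; \frac{a(\kappa+t)(\kappa+t-2)}{2}\, r_t \;-\; b s_t(\kappa+t) \;+\; \frac{2c}{a},
\]
which already has $b s_t w_t$ on the right with the correct weight.

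The key algebraic identity is $(\kappa+t)(\kappa+t-2) = (\kappa+t-1)^2 - 1 \leq (\kappa+t-1)^2$. After applying it and rearranging,
\[
 b s_t w_t + \frac{a(\kappa+t)^2}{2}\, r_{t+1} \;\leq\; \frac{a(\kappa+t-1)^2}{2}\, r_t + \frac{2c}{a}.
\]
The quadratic factors $(\kappa+t)^2$ on $r_{t+1}$ and $(\kappa+t-1)^2$ on $r_t$ match perfectly across consecutive indices, so summing from $t=0$ to $T$ gives the telescoped bound
\[
 b \sum_{t=0}^T s_t w_t + \frac{a(\kappa+T)^2}{2}\, r_{T+1} \;\leq\; \frac{a(\kappa-1)^2}{2}\, r_0 + \frac{2c(T+1)}{a}.
\]

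From here the rest is bookkeeping. I would compute $W_T = \sum_{t=0}^T (\kappa+t) = (T+1)(2\kappa+T)/2 \geq T^2/2$, divide the telescoped inequality by $W_T$, and use $(\kappa-1)^2 \leq \kappa^2$ and $T+1 \leq 2T$ to bound the sum contribution by $2a\kappa^2 r_0/T^2 + \cO(c/(aT))$, matching the stated form (up to absolute constants).

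The main obstacle I anticipate is extracting a clean coefficient of exactly $a$ on $r_{T+1}$: the telescoping produces coefficient $\frac{a(\kappa+T)^2}{2W_T} = \frac{a(\kappa+T)^2}{(T+1)(2\kappa+T)}$, which is only guaranteed to exceed $a$ when $T \leq \kappa(\kappa-2)$ (it is always at least $a/2$). To cover all $T$ robustly I would supplement the telescoping with a direct unrolling of the recursion after dropping the non-positive $-b\gamma_t s_t$ term. The product $\prod_{t=0}^{T}(1-a\gamma_t) = \frac{(\kappa-2)(\kappa-1)}{(\kappa+T-1)(\kappa+T)}$ telescopes cleanly, and combined with the tail sum $c \sum_{t=0}^{T}\gamma_t^2 \prod_{s=t+1}^{T}(1-a\gamma_s)$ this yields $a r_{T+1} \leq a\kappa^2 r_0/T^2 + \cO(c/(aT))$ after using $\kappa+T-1 \geq T$. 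Adding this independent bound to the normalized telescoping bound on $\frac{b}{W_T}\sum s_t w_t$ then produces an inequality of exactly the claimed form.
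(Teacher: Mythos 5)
Your core argument is the same as the paper's: multiply \eqref{eq:rec} by $w_t/\gamma_t=\tfrac{a}{2}(\kappa+t)^2$, use $(\kappa+t)(\kappa+t-2)\le(\kappa+t-1)^2$ to telescope, and normalize by $W_T=\tfrac{(2\kappa+T)(T+1)}{2}\ge \tfrac{T^2}{2}$; this part of your write-up is correct. Where you diverge is in extracting the $a\,r_{T+1}$ term. The paper does this inside the same telescoping step by also using $W_T\le(\kappa+T)^2$ (valid since $\kappa=\tfrac{2d}{a}\ge 2$), so no second argument is needed; your observation that, with the honest factor $\tfrac12$, the resulting coefficient of $r_{T+1}$ is only guaranteed to be $\ge \tfrac{a}{2}$ (and $\ge a$ only when $T\le\kappa(\kappa-2)$) is accurate, but the cheapest fix is simply to multiply the entire nonnegative inequality by $2$, which restores the coefficient $a$ at the price of doubling the right-hand side. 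Your alternative patch---dropping $-b\gamma_ts_t$ and unrolling, with $\prod_{t=0}^{T}(1-a\gamma_t)=\frac{(\kappa-2)(\kappa-1)}{(\kappa+T-1)(\kappa+T)}$---is also sound, but it is extra work and adds another $\tfrac{a\kappa^2 r_0}{T^2}+\cO\bigl(\tfrac{c}{aT}\bigr)$ to the bound. Either way you land on something like $\tfrac{2a\kappa^2 r_0}{T^2}+\tfrac{8c}{aT}$ rather than the stated $\tfrac{2a\kappa^2 r_0}{T^2}+\tfrac{2c}{aT}$, so ``exactly the claimed form'' holds only up to the absolute constant on the variance term. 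In fact your bookkeeping ($w_t/\gamma_t=\tfrac{a}{2}(\kappa+t)^2$ and $c\gamma_tw_t=\tfrac{2c}{a}$) is the careful one: the paper's intermediate display writes $a(\kappa+t)^2$ and $\tfrac{c}{a}$, i.e.\ it drops these factors of two, which is how it arrives at the stated constants. Since Lemma~\ref{lem:lacoste} is only invoked (in Lemma~\ref{lemma:2} and the theorem) up to absolute constants, this discrepancy is harmless, but to claim the statement verbatim you would need to either tighten the estimates or flag the constant.
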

\begin{proof}
We start as in the proof of Lemma~\ref{lemma:1},
\begin{align*}
b s_t w_t &\leq \frac{w_t(1-a\gamma_t)r_t}{\gamma_t} - \frac{w_t r_{t+1}}{\gamma_t} + c \gamma_t w_t \\
&= a (\kappa + t) (\kappa + t-2) r_t - a (\kappa + t)^2 r_{t+1} + \frac{c}{a} \\
&\leq a(\kappa + t-1)^2 r_{t} -  a (\kappa + t)^2 r_{t+1} + \frac{c}{a} \,,
\end{align*} 
where the equality follows from the definition of $\gamma_t$ and $w_t$ and the inequality from $(\kappa + t)(\kappa + t-2) = (\kappa + t-1)^2 - 1 \leq  (\kappa + t-1)^2$. Again we have a telescoping sum:
\begin{align*}
 \frac{b}{W_T} \sum_{t=0}^T s_t w_t + \frac{a(\kappa + T)^2 r_{T+1}}{W_T} \leq \frac{a \kappa^2 r_0}{W_T} + \frac{c (T+1)}{a W_T}\,,
\end{align*}
with 
\begin{itemize}
\item $W_T = \sum_{t=0}^T w_t = \sum_{t=0}^T (\kappa + t) = \frac{(2\kappa+T) (T+1)}{2} \geq \frac{T(T+1)}{2} \geq \frac{T^2}{2}$,
\item and $W_T =  \frac{(2\kappa+T) (T+1)}{2} \leq \frac{2(\kappa + T)(1+T)}{2} \leq (\kappa + T)^2$ for $\kappa = \frac{2d}{a} \geq 1$.
\end{itemize}
By applying these two estimates we conclude the proof.
\end{proof}

\noindent We can now combine the findings of these two lemmas.
\begin{proof}[Proof of Lemma~\ref{lemma:2}]
For integer $T \geq 0$, we choose stepsizes and weights as follows:
\begin{align*}
&\text{if $T \leq \frac{d}{a}$}\,, & \gamma_t &= \frac{1}{d}\,, & w_t &= (1-a\gamma_t)^{-(t+1)} = \left(1-\frac{a}{d}\right)^{-(t+1)}, \\
&\text{if $T > \frac{d}{a}$ and $t < t_0$}, & \gamma_t &= \frac{1}{d}\,, & w_t &= 0\,, \\
&\text{if $T > \frac{d}{a}$ and $t \geq t_0$}, & \gamma_t &= \frac{2}{a(\kappa + t-t_0)}\,,   & w_t &= (\kappa + t-t_0)^2\,, 
\end{align*}
for $\kappa = \frac{2d}{a}$ and $t_0 =  \bigl\lceil \frac{T}{2} \bigr\rceil$. We will now show that these choices imply the claimed result.

\noindent We start with the case $T \leq \frac{d}{a}$. This case is similar to the proof of Lemma~\ref{lemma:1} and it suffices to consider Equation~\ref{eq:main} for the choice $\gamma = \frac{1}{d}$. We observe that Equation~\ref{eq:main} simplifies to
\begin{align*}
 d r_0 \exp \left[-\frac{a T}{d} \right] + \frac{c}{d} \leq d r_0 \exp \left[-\frac{a T}{d} \right] + \frac{c}{aT}\,.
\end{align*}
\noindent If $T > \frac{d}{a}$, then we obtain from Lemma~\ref{lem:first} that
\begin{align*}
 r_{t_0} \leq  r_0 \exp \left[-\frac{a T}{2d} \right] + \frac{c}{ad}  \,.
\end{align*}
From Lemma~\ref{lem:lacoste} we have for the second half of the iterates:
\begin{align*}
\frac{b}{W_T} \sum_{t=0}^{T}s_t w_t + a r_{T+1} &= \frac{b}{W_T} \sum_{t=t_0}^T s_t w_t + a r_{T+1} \leq  \frac{8a \kappa^2 r_{t_0}}{T^2} + \frac{4c}{aT} \,.
\end{align*}
Now we observe that the restart condition $r_{t_0}$ satisfies:
\begin{align*}
 \frac{a \kappa^2 r_{t_0} }{T^2} = \frac{a \kappa^2 r_0 \exp \left( - \frac{aT}{2d} \right)}{T^2} + \frac{\kappa^2 c}{d T^2} \leq 4a r_0 \exp \left[ - \frac{aT}{2d} \right] + \frac{4c}{aT}\,,
\end{align*}
because $T \geq \frac{d}{a}$. These inequalities show the claim.
\end{proof}

\subsection{Sub-linear Convergence rate}

\begin{proof}[Proof of Lemma~\ref{lemma:3}]
We start by re-arranging~\eqref{eq:rec} and summing from $t=0$ to $t=T$:
\begin{align*}
 \frac{b}{T+1} \sum_{t=0}^{T} s_t \leq \frac{1}{T+1}\sum_{t=0}^T \left( \frac{r_t}{\gamma} - \frac{r_{t+1}}{\gamma} \right) + c \gamma  \leq \frac{r_0}{\gamma(T+1)} + c \gamma\,. 
\end{align*}
Now we carefully select $\gamma$.
\begin{itemize}
 \item If $\frac{1}{d^2} \leq \frac{ r_0}{c (T+1)} $, then we pick $\gamma = \frac{1}{d}$ with $\gamma \leq \frac{\sqrt{r_0}}{\sqrt{c (T+1)}}$ and verify
 \begin{align*}
  \frac{r_0}{\gamma(T+1)} + c \gamma \leq \frac{d r_0}{T+1} + \frac{\sqrt{c r_0}}{\sqrt{T+1}} \,.
 \end{align*}
 \item If otherwise $\frac{1}{d^2} > \frac{ r_0}{c (T+1)} $ then we pick $\gamma = \frac{\sqrt{r_0}}{\sqrt{c (T+1)}}$ to obtain
 \begin{align*}
 \frac{r_0}{\gamma(T+1)} + c \gamma  \leq \frac{2 \sqrt{c r_0}}{\sqrt{T+1}}\,. & \qedhere
 \end{align*}
\end{itemize}
\end{proof}

\end{document}